\definecolor{confICLR}{HTML}{D0F0F0} 
\definecolor{confNeurIPS}{HTML}{D0F0F0} 
\definecolor{confICML}{HTML}{D0F0F0} 
\definecolor{confKDD}{HTML}{D0F0F0} 
\definecolor{confMICCAI}{HTML}{D0F0F0} 
\definecolor{confCVPR}{HTML}{D0F0F0} 
\definecolor{confICCV}{HTML}{D0F0F0} 
\definecolor{confACMMM}{HTML}{D0F0F0} 
\newcommand{\ConfTag}[3]{\colorbox{#1}{\textcolor{black}{\scriptsize\textsc{#2 #3}}}}
\newcommand{\ICLR}[1]{\ConfTag{confICLR}{ICLR}{#1}}
\newcommand{\NeurIPS}[1]{\ConfTag{confNeurIPS}{NeurIPS}{#1}}
\newcommand{\NIPS}[1]{\ConfTag{confNeurIPS}{NIPS}{#1}}
\newcommand{\ICML}[1]{\ConfTag{confICML}{ICML}{#1}}
\newcommand{\KDD}[1]{\ConfTag{confKDD}{KDD}{#1}}
\newcommand{\MICCAI}[1]{\ConfTag{confMICCAI}{MICCAI}{#1}}
\newcommand{\CVPR}[1]{\ConfTag{confCVPR}{CVPR}{#1}}
\newcommand{\ICCV}[1]{\ConfTag{confICCV}{ICCV}{#1}}
\newcommand{\MM}[1]{\ConfTag{confACMMM}{MM}{#1}}
\definecolor{lightblue}{HTML}{EBF5FB}   
\definecolor{lightgreen}{HTML}{E8F8F5}   
\definecolor{lightyellow}{HTML}{FEF9E7}   
\definecolor{lightgray_highlight}{HTML}{F2F3F4} 
\definecolor{lightgray_highlight}{gray}{0.9}
\definecolor{lightcyan}{rgb}{0.88, 1, 1}
\definecolor{lightpink}{rgb}{1, 0.92, 0.96}
\DeclareMathOperator{\spec}{spec}
\begin{document}

\title{Differential-Integral Neural Operator for Long-Term Turbulence Forecasting}

\newcommand{\method}{{\fontfamily{lmtt}\selectfont \textbf{DINO}}}

\author{Hao Wu}
\authornote{These authors contributed equally to this work.}
\affiliation{%
  \institution{Tsinghua University}
  \city{Beijing}
  \country{China}}
\email{wu-h25@mails.tsinghua.edu.cn}

\author{Yuan Gao}
\authornotemark[1]
\affiliation{%
  \institution{Tsinghua University}
  \city{Beijing}
  \country{China}}
\email{yuangao24@mails.tsinghua.edu.cn}

\author{Fan Xu}
\authornotemark[1]
\affiliation{%
  \institution{University of Science and Technology of China}
  \city{Hefei}
  \country{China}}
\email{markxu@mail.ustc.edu.cn}

\author{Fan Zhang}
\affiliation{%
  \institution{The Chinese University of Hong Kong}
  \city{Hong Kong}
  \country{China}}
\email{fzhang25@cse.cuhk.edu.hk}

\author{Qingsong Wen}
\affiliation{%
  \institution{Squirrel AI}
  \city{Washington}
  \country{USA}}
\email{qingsongedu@gmail.com}

\author{Kun Wang}
\affiliation{%
  \institution{Nanyang Technological University}
  \city{Singapore}
  \country{Singapore}}
\email{wk520529wjh@gmail.com}

\author{Xiaomeng Huang}
\authornote{Corresponding authors.}
\affiliation{%
  \institution{Tsinghua University}
  \city{Beijing}
  \country{China}}
\email{hxm@tsinghua.edu.cn}

\author{Xian Wu}
\authornotemark[2]
\affiliation{%
  \institution{Tencent}
  \city{Beijing}
  \country{China}}
\email{kevinxwu@tencent.com}

\renewcommand{\shortauthors}{Wu et al.}

\begin{abstract}
Accurately forecasting the long-term evolution of turbulence represents a grand challenge in scientific computing and is crucial for applications ranging from climate modeling to aerospace engineering. Existing deep learning methods, particularly neural operators, often fail in long-term autoregressive predictions, suffering from catastrophic error accumulation and a loss of physical fidelity. This failure stems from their inability to simultaneously capture the distinct mathematical structures that govern turbulent dynamics: local, dissipative effects and global, non-local interactions.
In this paper, we propose the {\textbf{\underline{D}}}ifferential-{\textbf{\underline{I}}}ntegral {\textbf{\underline{N}}}eural {\textbf{\underline{O}}}perator (\method{}), a novel framework designed from a first-principles approach of operator decomposition. \method{} explicitly models the turbulent evolution through parallel branches that learn distinct physical operators: a local differential operator, realized by a constrained convolutional network that provably converges to a derivative, and a global integral operator, captured by a Transformer architecture that learns a data-driven global kernel. This physics-based decomposition endows \method{} with exceptional stability and robustness.
Through extensive experiments on the challenging 2D Kolmogorov flow benchmark, we demonstrate that \method{} significantly outperforms state-of-the-art models, achieving a \textbf{70\% reduction in relative error} for long-term forecasting (99 steps). Unlike baselines that suffer from spectral decay, \method{} successfully suppresses error accumulation over hundreds of timesteps and accurately reproduces the theoretical $k^{-3}$ energy spectrum. These results establish \method{} as a new benchmark for physically consistent, long-range turbulence forecasting. Our codes are available at~\url{https://github.com/Alexander-wu/DINO}.
\end{abstract}

\begin{CCSXML}
<ccs2012>
 <concept>
  <concept_id>00000000.0000000.0000000</concept_id>
  <concept_desc>Do Not Use This Code, Generate the Correct Terms for Your Paper</concept_desc>
  <concept_significance>500</concept_significance>
 </concept>
 <concept>
  <concept_id>00000000.00000000.00000000</concept_id>
  <concept_desc>Do Not Use This Code, Generate the Correct Terms for Your Paper</concept_desc>
  <concept_significance>300</concept_significance>
 </concept>
 <concept>
  <concept_id>00000000.00000000.00000000</concept_id>
  <concept_desc>Do Not Use This Code, Generate the Correct Terms for Your Paper</concept_desc>
  <concept_significance>100</concept_significance>
 </concept>
 <concept>
  <concept_id>00000000.00000000.00000000</concept_id>
  <concept_desc>Do Not Use This Code, Generate the Correct Terms for Your Paper</concept_desc>
  <concept_significance>100</concept_significance>
 </concept>
</ccs2012>
\end{CCSXML}

\ccsdesc[500]{Do Not Use This Code~Generate the Correct Terms for Your Paper}
\ccsdesc[300]{Do Not Use This Code~Generate the Correct Terms for Your Paper}
\ccsdesc{Do Not Use This Code~Generate the Correct Terms for Your Paper}
\ccsdesc[100]{Do Not Use This Code~Generate the Correct Terms for Your Paper}




\keywords{Scientific Machine Learning, Operator Learning, Inductive Biases}



\maketitle

\section{Introduction}
\begin{figure}[t]
  \centering
  \includegraphics[width=1\linewidth]{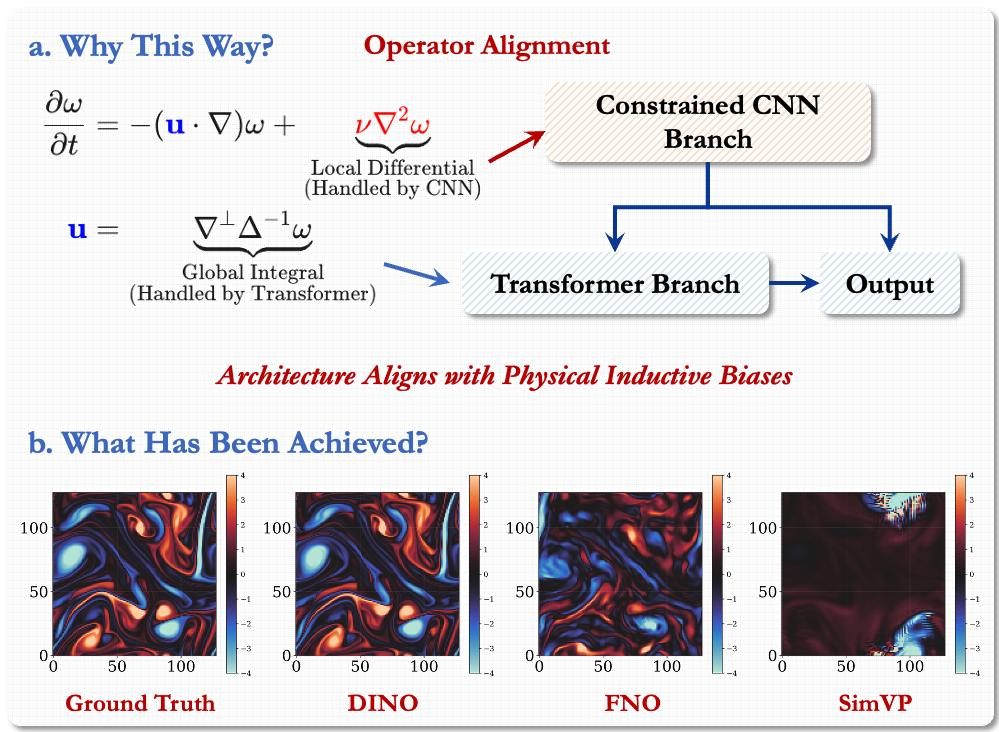}
\caption{
    \textbf{Overview of \method{} and its long-term forecasting performance.} 
    \textbf{(a) Motivation:} \method{} aligns its neural architecture with the operator structure of the Navier-Stokes equations, employing parallel differential (CNN) and integral (Transformer) branches. 
    \textbf{(b) Performance:} In long-term rollouts ($T=99$), \method{} maintains sharp physical fidelity, whereas existing models suffer from either over-smoothing (FNO) or simulation collapse (SimVP).
}
  \label{fig:intro}
  
\end{figure}
Modeling the long-term evolution of turbulent flows stands as a cornerstone of modern science, with applications spanning from daily weather forecasting~\citep{bi2023accurate, wu2025triton, gao2025oneforecast, lam2023learning} and global climate modeling~\citep{raaisaanen2007reliable, scher2018toward} to the design of advanced earth systems~\citep{marati2004energy, storch2012estimate}.
However, numerical methods~\citep{moin1998direct, scardovelli1999direct} face an prohibitive computational cost that renders large-scale long-term simulations impractical.
Recently, deep learning approaches, particularly Neural Operators~\citep{bonev2023spherical, wu2024neural, li2021fourier}, have emerged as a promising alternative. By learning the solution operator of a physical system, they offer the potential to serve as surrogate models that accelerate simulations by several orders of magnitude.
Despite this promise, these data-driven models encounter a critical bottleneck when applied to a core challenge: \textit{the long-term autoregressive forecasting of physical systems. When required to iterate based on their own previous predictions, models often suffer from a catastrophic accumulation of errors, causing their predictions to diverge from the true physical dynamics.}

This failure in long-term prediction is not arbitrary and typically manifests in two distinct modes: \textit{over-smoothing}~\citep{yang2020revisiting, chen2020measuring}, where the model fails to preserve fine-scale vortex structures, leading to an anomalous decay of energy in the high-frequency spectrum; and \textit{simulation collapse}~\citep{sivaselvan2006lagrangian,johnsen2009numerical}, where the model generates non-physical artifacts and energy divergence.
We argue that these issues are not merely matters of model capacity or training techniques but stem from a more profound structural deficiency in current neural operator architectures.
Specifically, this deficiency lies in the fundamental mismatch between the model's architecture and the mathematical structure of the governing physical laws, the Navier-Stokes equations.
The Navier-Stokes equations are inherently composed of operators with distinct mathematical properties: local differential operators, such as the viscosity and pressure gradient terms, which govern local, high-frequency dissipative processes; and global integral operators, which are necessary to enforce global constraints like incompressibility.
Existing monolithic architectures, such as the Fourier Neural Operator (FNO)~\citep{li2021fourier} relying solely on global convolutions or U-Net~\citep{ronneberger2015u} relying on local ones, are naturally biased towards one class of operators while neglecting the other. This operator mismatch is the root cause of the violation of physical laws and the subsequent error accumulation in long-term predictions.

Based on this insight, we advocate that a successful physical surrogate model should not attempt to fit the entire complex evolution with a single, homogeneous structure. Instead, it should follow a design principle of Physics-Decomposition. To this end, we propose the Differential-Integral Neural Operator (\method{}). 
The parallel architecture of \method{} is a direct mirror and implementation of the operator structure of the Navier-Stokes equations. Its core consists of two synergistic branches: a differential operator branch, which we construct using specially constrained convolutional networks~\citep{he2016deep, raonic2023convolutional} that are theoretically proven to converge to a true differential operator in the limit of grid refinement, dedicated to learning the system's local dynamics; and an integral operator branch, where we innovatively leverage the global self-attention mechanism of the Transformer~\citep{NIPS2017_3f5ee243, dosovitskiy2021an}, interpreting it as a powerful tool for learning a data-driven global integral kernel to capture non-local interactions.

As illustrated in Figure~\ref{fig:intro}a, this \textit{Operator Alignment} allows \method{} to resolve the structural mismatch that plagues monolithic architectures. Consequently, it achieves stable long-term rollouts by effectively suppressing common failure modes such as over-smoothing and simulation collapse (Figure~\ref{fig:intro}b). By aligning its architecture with the mathematical structure of the physical laws, \method{} is designed from first principles to achieve physical consistency, long-term stability, and model interpretability. The main contributions of this paper are summarized as follows: \ding{182} \textbf{\textit{Theoretically}}, we are the first to propose and argue that the failure of neural operators in long-term forecasting stems from an operator mismatch with the underlying PDEs, and we establish physics-decomposition as an effective design principle to address this problem. \ding{183} \textbf{\textit{Methodologically}}, we design \method{}, a novel neural operator framework that, for the first time, integrates a differential operator with rigorous mathematical convergence guarantees (via constrained CNNs) and a powerful global integral operator (via a Transformer) into a unified parallel model. \ding{184} \textbf{\textit{Empirically}}, through extensive experiments on the challenging task of long-term 2D Kolmogorov flow forecasting, we demonstrate that \method{} achieves significant, breakthrough advantages in prediction accuracy, long-term stability, and physical fidelity over existing state-of-the-art methods.

\section{Related Work}
\label{sec:related_work}
\subsection{Neural Operators in Scientific Computing.} 
Neural Operators emerge as a powerful class of deep learning models for scientific computing, aiming to learn mappings between infinite-dimensional function spaces. Their theoretical advantage of discretization-invariance shows immense potential. Among them, global operators like the Fourier Neural Operator (FNO)~\citep{li2021fourier} excel at capturing long-range dependencies via efficient global convolutions in the frequency domain, yet their low-pass filtering nature often leads to the over-smoothing of local, high-frequency details. In contrast, local operators based on Convolutional Neural Networks (CNNs), such as U-Net~\citep{ronneberger2015u}, are adept at capturing local features but lack rigorous operator properties and struggle to model global constraints in fluid dynamics due to their limited receptive fields. Prevailing methods thus face a trade-off between global and local capabilities, with monolithic architectures struggling to address the diverse mathematical structures within complex physical systems~\citep{hess2022physically}.

\subsection{Pursuing Long-Term Stability.} 
To address error accumulation in long-term forecasting~\citep{fan2020long, fahlman2022long, Sorjamaa2007MethodologyFL}, various strategies have been proposed. Some works improve the training paradigm, such as PDE-Refiner~\citep{lippe2023pde}, which uses a denoising objective inspired by diffusion models~\citep{croitoru2023diffusion} to force attention on the full frequency spectrum~\citep{song2003frequency, brochard1975frequency}. Others explore hybrid architectures that combine global and local modules in parallel. While empirically effective, these approaches generally lack a theoretical explanation derived from first principles to clarify the fundamental mechanisms behind their success.

\subsection{Operator Alignment.} 
Integrating physical priors is key to creating generalizable models. Unlike PINNs~\citep{cai2021physics, karniadakis2021physics}, which apply soft constraints via the loss function, or structured models (e.g., Hamiltonian Neural Networks)~\citep{greydanus2019hamiltonian, toth2019hamiltonian} that preserve invariants, our work proposes a more fundamental principle: operator alignment. Through physics-decomposition, we directly map our model's architecture to the intrinsic mathematical structure (i.e., differential and integral operators)~\citep{liu2024neural} of the governing equations. This approach elevates physics-inductive biases~\citep{wu2024pastnet} from the phenomenological level to the foundational operator level, aiming to fundamentally solve the long-term stability problem and paving a new way for building interpretable physical surrogate models.

\section{Methodology}
\subsection{Preliminaries}

We consider spatio-temporal dynamical systems governed by a partial differential equation (PDE), where the state $u(t, x)$ is defined over time $t \in [0, T]$ and a spatial domain $\Omega \subset \mathbb{R}^d$. The evolution is described by a nonlinear operator $\mathcal{N}$:
\begin{equation}\small
    \frac{\partial u}{\partial t} = \mathcal{N}(u, \nabla u, \nabla^2 u, \dots),
\end{equation}
subject to an initial condition $u(0, x) = u_0(x)$ and appropriate boundary conditions.

Operator-theoretically, these dynamics are abstracted by a \textbf{solution operator} $\mathcal{S}$. We focus on the \textbf{one-step solution operator} $\mathcal{S}_{\Delta t}$, which propagates the state from $u_t$ to $u_{t+\Delta t} = \mathcal{S}_{\Delta t}(u_t)$. A neural operator seeks to learn a parametric map $\mathcal{G}_\theta$ that approximates $\mathcal{S}_{\Delta t}$.

A central challenge in scientific simulation is \emph{accurate long-term forecasting}, which requires the autoregressive application of the learned operator $\mathcal{G}_\theta$:
\begin{equation}\small
    \hat{u}_{K\Delta t} = \mathcal{G}_\theta(\hat{u}_{(K-1)\Delta t}) = \mathcal{G}_\theta(\mathcal{G}_\theta(\hat{u}_{(K-2)\Delta t})) = \dots = \underbrace{\mathcal{G}_\theta \circ \cdots \circ \mathcal{G}_\theta}_{K \text{ applications}}(u_0).
\end{equation}
In nonlinear or chaotic systems, minor model errors amplify exponentially with each iteration. The fidelity of long-term rollouts thus hinges not only on single-step accuracy but critically on the \emph{long-term stability} of $\mathcal{G}_\theta$. This imperative motivates the design of \method{}.

\begin{figure*}[h!]
    \centering
    \includegraphics[width=0.9\textwidth]{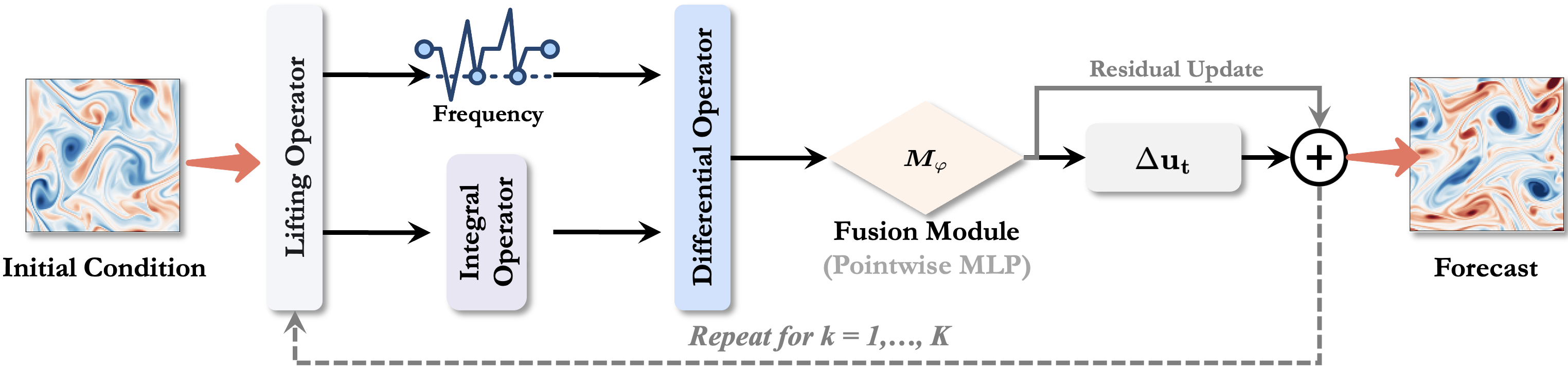} 
    \caption{
        \textbf{Architecture of \method{} for a single forecast step.} 
        The model employs a sequential refinement pipeline. An initial \textbf{\textit{Lifting Operator}} maps the input field $u_t$ to a latent space. The model then processes this representation with a \textbf{\textit{Global Corrector}}, followed by a \textbf{\textit{Local Refiner}}. The final forecast $\hat{u}_{t+\Delta t}$ is produced via a residual update with a skip connection.
    }
    \label{fig:dino_architecture}
\end{figure*}

\subsection{Physics-Decomposition}
Governing PDEs, such as the Navier-Stokes equations in vorticity form, are structurally composed of diverse mathematical operators:
\begin{equation}\small
    \frac{\partial \omega}{\partial t} + \underbrace{(u \cdot \nabla)\omega}_{\text{Local Differential}} = \underbrace{\nu \nabla^2 \omega}_{\text{Local Differential}} + \underbrace{\nabla \times f}_{\text{Forcing}}.
\end{equation}
Here, the velocity field $u$ is non-locally coupled to the vorticity $\omega$ via the Biot-Savart law, whose solution operator is a global integral transform:
\begin{equation}\small
    u(x) = (\nabla^{\perp} \circ \mathcal{K}_G)[\omega](x) = \nabla^{\perp} \int_{\Omega} G(x - y) \omega(y)  dy,
\end{equation}
where $G$ is the Green's function for the Laplacian and $\nabla^{\perp}$ is the perpendicular gradient. System evolution is thus governed by two distinct operator classes: \textbf{local differential operators} for high-frequency, dissipative processes, and \textbf{global integral operators} for long-range constraints.

Standard neural operators (e.g., FNO, U-Net) employ \emph{homogeneous} architectures, creating a \textbf{\textit{Structure-Operator Mismatch}}: FNOs are biased towards global integrals (\textit{leading to oversmoothing}), while CNNs favor local features (\textit{failing to capture global constraints}). This structural incongruity is a primary source of error accumulation and physical infidelity in long-term forecasting.

We propose the \textbf{Physics-Decomposition} principle: a model's architecture should mirror the compositional structure of its governing PDE. We depart from monolithic designs to build a \emph{heterogeneous} network whose modules are aligned with the mathematical operators of the PDE. The \method{} framework embodies this principle, approximating $\mathcal{S}_{\Delta t}$ as a residual update:
\begin{equation}\small
    \hat{u}_{t+\Delta t} = u_t + (\mathcal{P} \circ \mathcal{G}_{\theta_D}^{\mathrm{Diff}} \circ \mathcal{G}_{\theta_I}^{\mathrm{Int}} \circ \mathcal{L})(u_t).
\end{equation}
Here, $\mathcal{L}$ and $\mathcal{P}$ are latent space projection operators. The core is a \emph{sequential refinement} pipeline: the global integral operator $\mathcal{G}_{\theta_I}^{\mathrm{Int}}$ corrects the low-frequency background flow, and subsequently, the theoretically-constrained differential operator $\mathcal{G}_{\theta_D}^{\mathrm{Diff}}$ sharpens high-resolution details. This decomposition hard-codes physical priors into the model's structure, aiming to resolve the operator mismatch and provide a principled pathway toward interpretable and stable surrogate models.
\begin{figure*}[h!]
    \centering
    \includegraphics[width=0.95\textwidth]{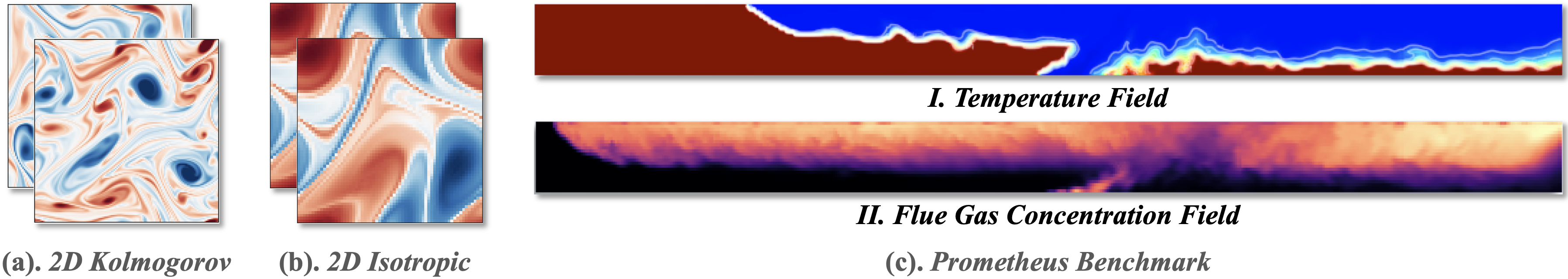} 
    \vspace{-10pt}
    \caption{
        \textbf{\textit{Overview of the experimental benchmark datasets}.}
        We employ three benchmarks with distinct physical characteristics to comprehensively evaluate model performance:
        (a) \textbf{\textit{2D Kolmogorov flow}}, a statistically stationary forced turbulence, tests for spectral fidelity.
        (b) \textbf{\textit{2D Isotropic isotropic turbulence}}, probes for long-term stability and the accurate modeling of physical dissipation.
        (c) \textbf{\textit{The Prometheus benchmark}}, a complex fire dynamics scenario featuring (I) a temperature field with sharp interfaces and (II) a flue gas concentration field, is used to test the model's capabilities in multi-physics coupling and out-of-distribution (OOD) generalization.
    }
    \label{fig:datasets}
\end{figure*}
\subsection{Global Integral Operator via Self-Attention}

Global interactions within a system are modeled by an integral operator $\mathcal{K}$ of the form $(\mathcal{K} u)(x) = \int_{\Omega} k(x, y) u(y) d\mu(y)$, defined by a kernel $k(x, y)$. We aim to learn this kernel in a data-driven manner.

Self-attention in Transformers provides a discrete, learnable realization of this concept. For a latent field $Z \in \mathbb{R}^{N \times d_z}$ discretized over $N$ spatial points, we define the integral operator $\mathcal{G}_{\theta_I}^{\mathrm{Int}}$ as:
\begin{equation}\small
    (\mathcal{G}_{\theta_I}^{\mathrm{Int}}(Z))_i = \sum_{j=1}^{N} \underbrace{ \frac{\exp\left( q_i^\top k_j / \sqrt{d_k} \right)}{\sum_{l=1}^{N} \exp\left( q_i^\top k_l / \sqrt{d_k} \right)} }_{\text{Data-driven discrete kernel } \kappa_\theta(z_i, z_j)} v_j,
\end{equation}
where $q_i = z_i W_Q, k_j = z_j W_K, v_j = z_j W_V$ are queries, keys, and values obtained via learnable matrices $W_Q, W_K, W_V$. The attention weight $\kappa_\theta(z_i, z_j)$ functions as a data-dependent discrete integral kernel, capable of capturing non-local, nonlinear dependencies. The weighted sum over value vectors completes the correspondence to numerical quadrature.

\subsection{Local Differential Operator via Constrained Convolutions}

Local dynamics are governed by differential operators. While local, standard CNNs fail to approximate differential operators in the continuous limit. For a convolutional kernel $K$ and grid spacing $h$:
\begin{equation}\small
    \lim_{h \to 0} \mathrm{Conv}_K[u](x) = \lim_{h \to 0} \sum_{j \in \mathcal{S}} K_j u(x-jh) = u(x) \cdot \left( \sum_{j \in \mathcal{S}} K_j \right),
\end{equation}
a limit that is degenerate for differentiation.

We therefore construct $\mathcal{G}_{\theta_D}^{\mathrm{Diff}}$ using \emph{constrained convolutions} inspired by finite difference methods. We impose moment conditions on the kernel $K$ (e.g., for a first-order derivative, $\sum_j K_j = 0$ and $\sum_j K_j j \neq \mathbf{0}$). By scaling the output by $1/h^p$ for a $p$-th order derivative, the operator is proven to converge to a true differential operator in the continuous limit:
\begin{equation}\small
    \lim_{h \to 0} \frac{1}{h^p} \sum_{j \in \mathcal{S}} K'_j u(x-jh) = \mathcal{D}^p u(x),
\end{equation}
as established by \citep{liu2024neural}. Thus, $\mathcal{G}_{\theta_D}^{\mathrm{Diff}}$ acts as a local refiner that converges to a true differential operator, enabling it to accurately capture high-frequency details and ensure numerical consistency across resolutions.

\subsection{Theoretical Analysis}

In this section, we provide a theoretical justification for why the \method{} architecture maintains stability during long-term autoregressive rollouts. Our central argument is that by decomposing the evolution of a physical system into its integral and differential components, \method{} learns a dynamical map that is inherently stable at the operator level.

The stability of a learned one-step operator $\mathcal{G}_\theta$ is intrinsically linked to the spectral properties of its Jacobian, $J_{\mathcal{G}_\theta}(u) = \frac{\partial \mathcal{G}_\theta}{\partial u}$. Local stability of the operator requires that the spectral radius of its Jacobian, $\rho(J_{\mathcal{G}_\theta})$, is bounded by one. Our main theoretical result is formalized as follows.

\begin{theorem}[Spectral Radius Bound of the \method{} Operator]
    Let $\mathcal{G}_\theta$ be a well-trained \method{} operator with the structure $\mathcal{G}_\theta(u) = u + \Delta\mathcal{G}_\theta(u)$, where the increment is $\Delta\mathcal{G}_\theta = \mathcal{P} \circ \mathcal{G}_{\theta_D}^{\mathrm{Diff}} \circ \mathcal{G}_{\theta_I}^{\mathrm{Int}} \circ \mathcal{L}$. Assume:
    \begin{enumerate}
        \item[\textbf{(A1)}] The global integral operator $\mathcal{G}_{\theta_I}^{\mathrm{Int}}$ (Transformer) functionally acts as a non-expansive map, i.e., its Jacobian has an operator norm $\|J_{\mathcal{G}_{\theta_I}}\| \le 1$.
        \item[\textbf{(A2)}] The local differential operator $\mathcal{G}_{\theta_D}^{\mathrm{Diff}}$ (constrained CNN) successfully learns the dissipative nature of the physical system, implying its Jacobian $J_{\mathcal{G}_{\theta_D}}$ is negative definite on relevant high-frequency subspaces. Formally, for any relevant vector $v$, there exists a constant $c > 0$ such that $\spec(J_{\mathcal{G}_{\theta_D}}) \subset \{z \in \mathbb{C} \mid \mathrm{Re}(z) \le -c\}$.
    \end{enumerate}
    Then, neglecting the mild effects of the lifting and projection operators ($\mathcal{L}, \mathcal{P}$), the spectral radius of the full \method{} operator's Jacobian is bounded by one:
    \begin{equation}
        \rho(J_{\mathcal{G}_\theta}(u)) \le 1.
    \end{equation}
\end{theorem}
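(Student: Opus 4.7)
My plan is to reduce the claim to a spectral statement about the Jacobian of the residual increment and then combine the two assumptions through a chain-rule argument. First I would differentiate the identity $\mathcal{G}_\theta(u) = u + \Delta\mathcal{G}_\theta(u)$ to obtain $J_{\mathcal{G}_\theta}(u) = I + J_{\Delta\mathcal{G}_\theta}(u)$, and then apply the chain rule to the composition $\mathcal{P} \circ \mathcal{G}_{\theta_D}^{\mathrm{Diff}} \circ \mathcal{G}_{\theta_I}^{\mathrm{Int}} \circ \mathcal{L}$. Under the stated assumption that the lifting and projection operators $\mathcal{L}$ and $\mathcal{P}$ contribute only mild, essentially isometric effects (so that $J_{\mathcal{L}}$ and $J_{\mathcal{P}}$ can be treated as identity up to a bounded similarity), the product reduces to $J_{\Delta\mathcal{G}_\theta}(u) \approx J_{\mathcal{G}_{\theta_D}} \, J_{\mathcal{G}_{\theta_I}}$. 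The problem then becomes showing that the spectrum of $J_{\Delta\mathcal{G}_\theta}$ lies in the closed disk $\{z \in \mathbb{C} : |z+1| \le 1\}$, since the spectral mapping gives $\spec(I + X) = 1 + \spec(X)$.

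Next I would exploit the two assumptions jointly. From (A1), the non-expansiveness $\|J_{\mathcal{G}_{\theta_I}}\| \le 1$ yields the norm bound $\|J_{\Delta\mathcal{G}_\theta} v\| \le \|J_{\mathcal{G}_{\theta_D}} v'\|$ with $v' = J_{\mathcal{G}_{\theta_I}} v$, so any eigenvalue $\lambda$ of the product satisfies $|\lambda| \le \|J_{\mathcal{G}_{\theta_D}}\|$. From (A2), the dissipativity $\spec(J_{\mathcal{G}_{\theta_D}}) \subset \{\mathrm{Re}(z) \le -c\}$ translates, through the standard numerical-range/Lyapunov characterization, into negative-definiteness of the symmetric part $\tfrac{1}{2}(J_{\mathcal{G}_{\theta_D}} + J_{\mathcal{G}_{\theta_D}}^{\top}) \preceq -c\,I$ on the invariant subspace of interest. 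I would then estimate $\mathrm{Re}\langle v, J_{\Delta\mathcal{G}_\theta} v\rangle$ for a unit eigenvector $v$ associated with $\lambda$, using the non-expansive factor to pass to $v' = J_{\mathcal{G}_{\theta_I}} v$ and the dissipative factor to conclude $\mathrm{Re}(\lambda) \le 0$. Combined with the norm bound, the pair $(\mathrm{Re}(\lambda), |\lambda|)$ can be shown to sit inside the disk of radius $1$ centered at $-1$, which yields $|1+\lambda| \le 1$ and hence $\rho(I + J_{\Delta\mathcal{G}_\theta}) \le 1$.

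The main obstacle, in my view, is the step from spectral information on the individual factors $J_{\mathcal{G}_{\theta_D}}$ and $J_{\mathcal{G}_{\theta_I}}$ to spectral information on their product $J_{\mathcal{G}_{\theta_D}} J_{\mathcal{G}_{\theta_I}}$. In general, the spectrum of a product is not the product of spectra, and dissipativity is not preserved under composition with an arbitrary non-expansive map. I would address this either by (i) strengthening (A1) to require that $J_{\mathcal{G}_{\theta_I}}$ be positive semi-definite (consistent with its interpretation as a softmax-weighted averaging kernel, which is row-stochastic and hence has spectrum in the unit disk with a Perron-type nonnegativity), so that the composition inherits the sign of the numerical range from $J_{\mathcal{G}_{\theta_D}}$; or (ii) arguing on the symmetrized operator $\tfrac{1}{2}(J_{\Delta\mathcal{G}_\theta} + J_{\Delta\mathcal{G}_\theta}^{\top})$ and invoking Bendixson's theorem to localize the eigenvalues of $J_{\Delta\mathcal{G}_\theta}$ into the half-plane $\{\mathrm{Re}(z) \le 0\}$. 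A secondary subtlety is making the phrase \emph{neglecting the mild effects of $\mathcal{L}$ and $\mathcal{P}$} precise; the cleanest route is to assume they are bi-Lipschitz with constants close to one, so that the induced similarity transform perturbs the spectrum by an amount controlled by $\|J_{\mathcal{L}} J_{\mathcal{P}} - I\|$, which can be absorbed into the slack provided by the strict inequality $c > 0$ in (A2).
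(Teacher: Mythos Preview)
Your skeleton matches the paper's proof exactly: differentiate to get $J_{\mathcal{G}_\theta}=I+J_{\Delta\mathcal{G}_\theta}$, apply the chain rule, absorb $\mathcal{L},\mathcal{P}$ into the identity, and reduce to showing that the eigenvalues of the core product $J_{\mathcal{G}_{\theta_D}}J_{\mathcal{G}_{\theta_I}}$ lie in the disk $\{z:|1+z|\le 1\}$. The obstacle you flag---that spectral or numerical-range information on the factors does not transfer to the product---is precisely where the paper's own argument turns heuristic: it concedes that ``the eigenvalues of a product of matrices are notoriously difficult to relate to the eigenvalues of the factors'' and then simply \emph{asserts} that $\mathrm{Re}(\lambda_i)\le -c$ for the product, after which the computation $|1+\lambda|^2=1+2\,\mathrm{Re}(\lambda)+|\lambda|^2$ still requires the extra (unstated) side condition $|\lambda|^2\le -2\,\mathrm{Re}(\lambda)$ to conclude. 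Your proposed remedies (strengthening (A1) to positive semidefiniteness of $J_{\mathcal{G}_{\theta_I}}$, invoking Bendixson on the symmetrized product, and treating $\mathcal{L},\mathcal{P}$ as bi-Lipschitz perturbations absorbed by the slack $c>0$) go beyond anything the paper attempts, though none of them closes the gap without further hypotheses either. One small point against your own write-up: the sentence ``combined with the norm bound, the pair $(\mathrm{Re}(\lambda),|\lambda|)$ can be shown to sit inside the disk of radius $1$ centered at $-1$'' is exactly the same unfinished step---$\mathrm{Re}(\lambda)\le 0$ together with $|\lambda|\le\|J_{\mathcal{G}_{\theta_D}}\|$ does not by itself force $|1+\lambda|\le 1$---so you should flag that caveat explicitly rather than assert the conclusion.
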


\subsection{Optimization and Training}

All learnable parameters $\theta$ of the \method{} framework encompassing the lifting, integral, differential, and projection operators are optimized end-to-end. The training objective is to minimize the discrepancy between the model's one-step predictions and the ground-truth evolution. Specifically, given a dataset of $M$ trajectories, $\mathcal{D} = \{ (u_k^{(i)}, u_{k+1}^{(i)}) \}_{i=1, k=0}^{M, K-1}$, where $u_{k+1}^{(i)}$ is the true state evolved from $u_k^{(i)}$ over a single time step $\Delta t$, we minimize the following empirical risk:
\begin{equation}\small
    \mathcal{L}(\theta) = \frac{1}{|\mathcal{D}|} \sum_{(u_k, u_{k+1}) \in \mathcal{D}} \left\| \mathcal{G}_\theta(u_k) - u_{k+1} \right\|_{\mathcal{H}}^2.
    \label{eq:loss_function}
\end{equation}
Here, $\|\cdot\|_{\mathcal{H}}$ denotes the $L_2$ norm in the spatial domain $\Omega$, which quantifies the MSE between the predicted and ground-truth fields. We employ a one-step training strategy, where ground-truth data $u_k$ is always used as input for the next-step prediction. During inference, the model is deployed in an autoregressive fashion for multistep rollouts to evaluate its long-term stability and fidelity.


\section{Experiments}
\label{sec:experiments}
\subsection{Experimental Setup}
We evaluate the performance of \method{} on three fluid dynamics benchmarks, as shown in Figure~\ref{fig:datasets}. The \textbf{2D Kolmogorov Flow} dataset~\citep{lippe2023pde}, a canonical forced turbulence system, is used to test long-term prediction accuracy. The \textbf{2D Isotropic Turbulence} dataset~\citep{takamoto2022pdebench}, an unforced system, provides a stringent test for model stability and physical conservation. The \textbf{Prometheus-T} dataset~\citep{wu2024prometheus}, a complex fire simulation with multiple physical environments, is employed to assess out-of-distribution (OOD) generalization~\citep{wu2024pure}. We compare \method{} against a comprehensive set of state-of-the-art baselines from three categories: operator learning (e.g., FNO~\citep{li2021fourier}, LSM~\citep{wu2023solving}), computer vision (e.g., U-Net~\citep{ronneberger2015u}, ViT~\citep{dosovitskiy2021an}), and spatiotemporal forecasting (e.g., SimVP~\citep{gao2022simvp}). Performance for all models is measured by the \textbf{relative $L^2$ error}. All models are trained on a server with 8 NVIDIA A100 GPUs using PyTorch, optimized with Adam at a learning rate of $1 \times 10^{-3}$ for 500 epochs, and with a fixed random seed of 42 for reproducibility. More details see in Appendix~\ref{app:dataset}.

\subsection{Main Results}
\begin{table*}[t!]
\caption{Performance comparison on various turbulence and fluid dynamics datasets. \texttt{ID} and \texttt{OOD} denote in-distribution and out-of-distribution tests.}
    \label{tab:turbulence_results}
    \centering
    \footnotesize 
    \renewcommand{\arraystretch}{0.99} 
    \setlength{\tabcolsep}{14pt} 
    \begin{tabular}{l@{\hspace{4pt}}l|ccc|ccc|cc}
        \toprule
        \multicolumn{2}{l|}{\multirow{2}{*}{\textbf{Model Category}}} & \multicolumn{3}{c|}{\textbf{Kolmogorov Turbulence}} & \multicolumn{3}{c|}{\textbf{Isotropic Turbulence}} & \multicolumn{2}{c}{\textbf{Prometheus-T}} \\
        \cmidrule(lr){3-5} \cmidrule(lr){6-8} \cmidrule(lr){9-10}
        \multicolumn{2}{l|}{} & 1-step & 60-step & 99-step & 1-step & 10-step & 19-step & ID & OOD \\
        \midrule
        \rowcolor{lightblue!50} 
        \multicolumn{10}{l}{\textit{Operator Learning Models}} \\
        \faPuzzlePiece & FNO \ICLR{2021} & 0.0267 & 2.5634 & 3.1284 & 0.0118 & 0.1384 & 1.9832 & 0.0447 & 0.0506 \\
        \faPuzzlePiece & CNO \NeurIPS{2023} & 0.0407 & 7.7403 & 11.3015 & 0.0008 & 0.1227 & 1.5676 & 0.0652 & 0.0749 \\
        \faPuzzlePiece & LSM \ICML{2023}  & 0.0046 & 4.2579 & 5.1127 & 0.0017 & 0.1287 & 2.0382 & 0.0414 & 0.0456 \\
        \faPuzzlePiece & NMO \KDD{2024} & 0.0018 & 1.9832 & 2.1923 & 0.0002 & 0.0043 & 0.1873 & 0.0398 & 0.0441 \\
        \faPuzzlePiece & PDE-Refiner \NeurIPS{2023} & 0.0021 & 0.8732 & 1.9954 & 0.0003 & 0.0051 & 0.2103 & 0.0405 & 0.0428 \\
        \midrule
        \rowcolor{lightgreen!50}
        \multicolumn{10}{l}{\textit{Computer Vision Backbones}} \\
        \faCameraRetro & U-Net \MICCAI{2015} & 0.0182 & 3.7935 & 4.6647 & 0.0007 & 0.0296 & 0.6583 & 0.0931 & 0.1067 \\
        \faCameraRetro & ResNet \CVPR{2016} & 0.0098 & 2.9983 & 5.8743 & 0.0025 & 0.2424 & 2.3630 & 0.1015 & 0.1182 \\
        \faCameraRetro & ViT \ICLR{2021} & 0.0360 & 5.6154 & 5.6401 & 0.0074 & 0.2363 & 3.8732 & 0.0983 & 0.1157 \\
        \faCameraRetro & DiT \ICCV{2023} & 0.0038 & 5.9862 & 9.9283 & 0.0007 & 0.0283 & 1.2731 & 0.0872 & 0.1011 \\
        \midrule
        \rowcolor{lightyellow!50}
        \multicolumn{10}{l}{\textit{Spatiotemporal Models}} \\
        \faFilm & ConvLSTM \NIPS{2015} & 0.0374 & 3.9841 & 4.9381 & 0.0443 & 0.0687 & 1.2384 & 0.1152 & 0.1345 \\
        \faFilm & SimVP \CVPR{2022} & 0.0019 & 3.8504 & 5.0405 & 0.0002 & 0.0046 & 0.2231 & 0.0531 & 0.0608 \\
        \faFilm & PastNet \MM{2024} & 0.0128 & 1.7574 & 2.3837 & 0.0073 & 0.0348 & 0.6173 & 0.0476 & 0.0551 \\
        \midrule
        \rowcolor{lightgray_highlight}
        \faTrophy & \textbf{\method{}{}} & \textbf{0.0002} & \textbf{0.1972} & \textbf{0.5876} & \textbf{0.0001} & \textbf{0.0016} & \textbf{0.1110} & \textbf{0.0344} & \textbf{0.0359} \\
        \rowcolor{lightgray_highlight}
        & \textbf{Promotion (\%)} & \textbf{88.89} & \textbf{77.42} & \textbf{70.55} & \textbf{50.00} & \textbf{62.79} & \textbf{40.74} & \textbf{13.57} & \textbf{16.12} \\
        \bottomrule
    \end{tabular}
\end{table*}

\begin{figure*}[t!]
    \centering
    \includegraphics[width=1\textwidth]{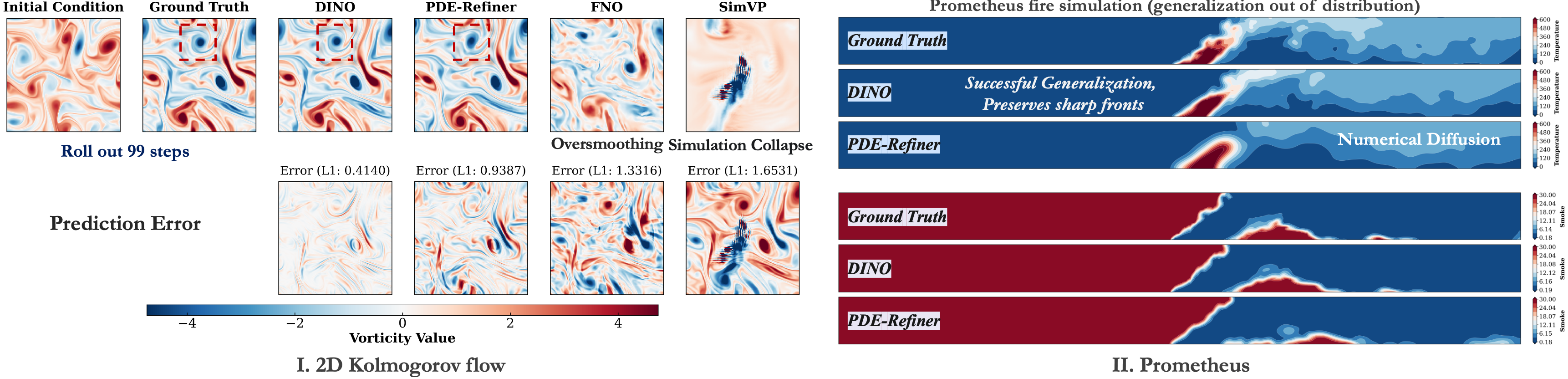} 
    \caption{
\textbf{Qualitative visualization of \method{}'s performance.} 
(I) In long-term forecasting of 2D Kolmogorov flow ($99$ steps), \method{} maintains high physical fidelity by preserving fine-scale vortices, effectively avoiding catastrophic failures like oversmoothing (FNO) and simulation collapse (SimVP). 
(II) For out-of-distribution generalization on the Prometheus fire simulation, \method{} accurately captures sharp physical fronts, in contrast to baseline models which exhibit severe numerical diffusion.}
    \label{fig:qualitative_results}
\end{figure*}

\begin{figure*}[h!]
    \centering
    \includegraphics[width=\textwidth]{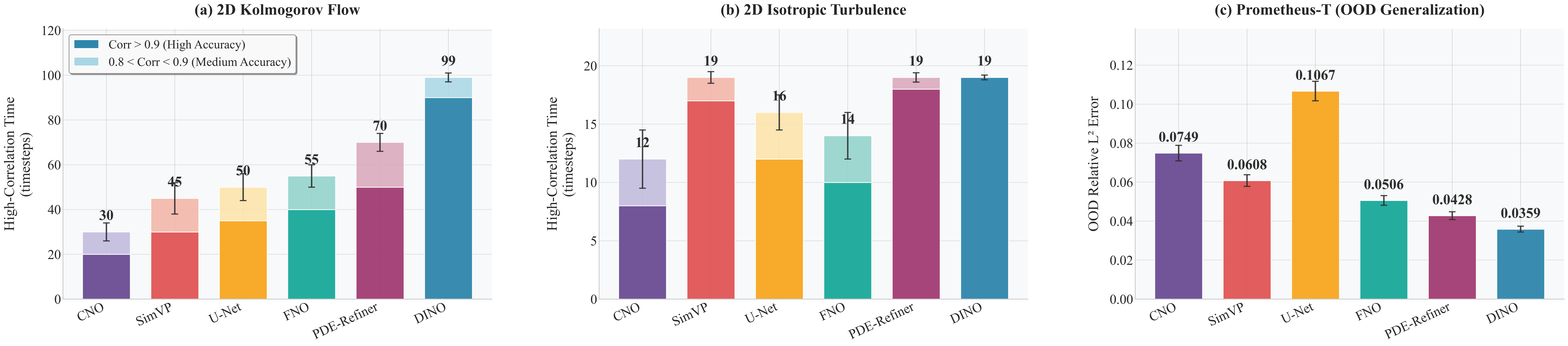}
    \caption{
        \textbf{Comprehensive performance of \method{} against state-of-the-art models across three distinct benchmarks.}
        \textbf{(a)} High-correlation time on the 2D Kolmogorov flow, a rigorous test for long-term stability. \method{} is the only model to maintain accuracy over the full 99-step rollout.
        \textbf{(b)} Performance on 2D isotropic turbulence, evaluating short-term physical fidelity. \method{} sustains the highest accuracy (Corr $>$ 0.9) throughout the entire prediction.
        \textbf{(c)} Out-of-distribution (OOD) generalization error on the Prometheus-T benchmark. \method{} achieves the lowest error, demonstrating superior robustness to unseen conditions.
    }
    \label{fig:comprehensive_performance_v2}
\end{figure*}

\begin{figure*}[h!]
    \centering
    \includegraphics[width=\textwidth]{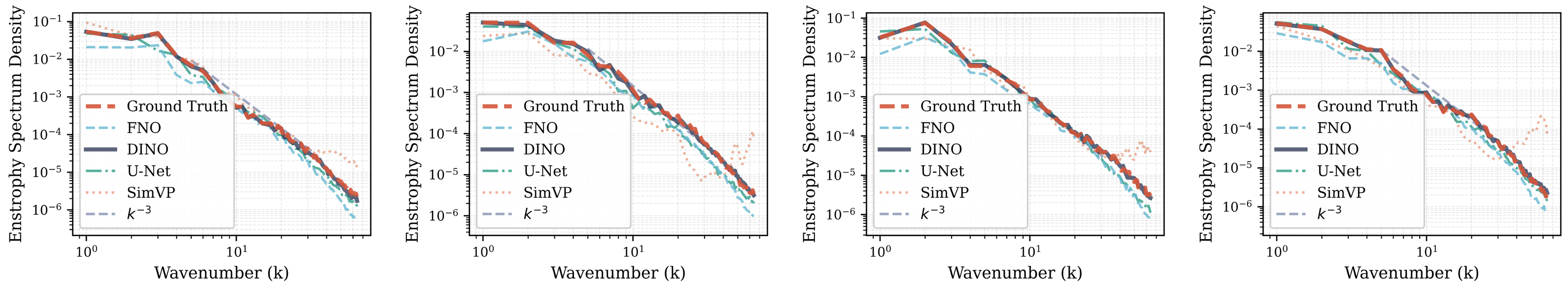}
\caption{\textbf{Enstrophy spectra for 2D Kolmogorov turbulence}. \method{}'s prediction accurately reproduces the spectrum across all wavenumbers, matching the Ground Truth and the theoretical $k^{-3}$ scaling law, which demonstrates superior physical fidelity. In contrast, FNO loses energy at high wavenumbers due to oversmoothing, while SimVP generates non-physical energy artifacts from simulation collapse. (\textit{Note. All results are the last time step}.)}
    \label{fig:enstrophy}
\end{figure*}

\paragraph{Quantitative Analysis}
As shown in Table~\ref{tab:turbulence_results}, \method{} significantly outperforms existing state-of-the-art methods across all benchmarks. On the most challenging 99-step Kolmogorov forecast, \method{} maintains a low error of 0.5876 while all baselines collapse (errors > 1.9), a >70\% improvement over the next-best model. Similar substantial gains are recorded on Isotropic Turbulence (>40\% error reduction) and Prometheus-T for OOD generalization (>16\% improvement), demonstrating its superior accuracy, long-term stability, and generalization.

\paragraph{Qualitative Analysis}
As visualized in Figure~\ref{fig:qualitative_results}, \method{} demonstrates superior physical consistency. For the long-term Kolmogorov rollout, it preserves intricate vortex structures, successfully avoiding the typical failure modes of \textit{oversmoothing} in FNO and \textit{simulation collapse} in SimVP. On the Prometheus OOD task, \method{} sharply captures advection-dominated fronts, unlike other models plagued by significant \textit{numerical diffusion}. This confirms that \method{}'s physics-decomposed architecture overcomes the structural deficiencies of prior methods.

\noindent \textbf{Comprehensive Performance: Stability, Fidelity, and Generalization.} Figure~\ref{fig:comprehensive_performance_v2} compares \method{}'s performance against state-of-the-art models across three distinct benchmarks. The results unequivocally demonstrate \method{}'s superiority in long-term stability, physical fidelity, and generalization. On the challenging 2D Kolmogorov flow (Figure~\ref{fig:comprehensive_performance_v2}a), a rigorous test for long-term stability, \method{} is the only model to maintain high correlation with the ground truth over the entire 99-step rollout, showcasing its exceptional ability to suppress error accumulation where other models fail. This high performance extends to short-term physical fidelity, as shown in the 2D isotropic turbulence test (Figure~\ref{fig:comprehensive_performance_v2}b), where \method{} again sustains the highest accuracy throughout the prediction. Crucially, \method{} also proves its robustness on unseen data, achieving the lowest out-of-distribution (OOD) generalization error on the complex Prometheus-T benchmark (Figure~\ref{fig:comprehensive_performance_v2}c). Taken together, these results provide compelling evidence that \method{}'s physics-decomposed architecture sets a new state-of-the-art, delivering forecasts that are simultaneously stable, accurate, and generalizable.
\vspace{4pt}

\noindent \textbf{Spectral Fidelity Analysis.} The enstrophy spectrum analysis visually demonstrates \method{}'s superior physical fidelity, as shown in Figure~\ref{fig:enstrophy}. \method{}'s predicted spectrum perfectly aligns with the ground truth across all wavenumbers and accurately captures the theoretical $k^{-3}$ scaling law, proving its ability to preserve both large-scale energy and fine-scale dissipation. This stands in stark contrast to the typical failure modes of baselines: FNO exhibits severe oversmoothing at high frequencies due to its low-pass filtering nature, while SimVP generates non-physical artifacts leading to simulation collapse. \method{}'s success is attributed to its physics-decomposed architecture, which correctly maintains the physical energy cascade across scales.

\begin{table*}[t!]
\caption{
    \textbf{Ablation study of \method{}'s key components and design principles.} 
    Performance degradation in ablated models highlights the necessity of operator-aligned architecture. The symbols \textit{(collapse)}, \textit{(diverge)}, and \textit{(smoothed)} denote specific physical failure modes observed during 99-step rollouts. All models maintain a comparable parameter budget ($\sim$20M).
}
\label{tab:ablation}
\vskip 0.1in
\centering
\footnotesize
\resizebox{1\textwidth}{!}{
\begin{sc}
    \renewcommand{\multirowsetup}{\centering}
    \setlength{\tabcolsep}{5pt} 
    \begin{tabular}{ll|c|cc|cc|c}
        \toprule
        \multicolumn{2}{c|}{\multirow{3}{*}{\textbf{Model Variant}}} & \multirow{3}{*}{\textbf{Params (M)}} & \multicolumn{5}{c}{\textbf{Datasets (Relative $L^2$ Error)}} \\
        \cmidrule(lr){4-8}
        \multicolumn{2}{c|}{} & & \multicolumn{2}{c}{Kolmogorov Turbulence} & \multicolumn{2}{c}{Isotropic Turbulence} & {Prometheus-T} \\
        \cmidrule(lr){4-5} \cmidrule(lr){6-7} \cmidrule(lr){8-8}
        \multicolumn{2}{c|}{} & & 50-step & 99-step & 10-step & 19-step & OOD \\
        \midrule
        
        \rowcolor{lightgray_highlight}
        \faStar & ~\textbf{\method{}} & \textbf{20.1} & \textbf{0.152} & \textbf{0.587} & \textbf{0.0016} & \textbf{0.111} & \textbf{0.0359} \\
        \midrule

        \multicolumn{8}{l}{\textit{\textbf{Necessity of Physics-Decomposition}}} \\
        \faMinusCircle & \textit{w/o Integral} (Diff-Only) & 19.5 & 1.234 & 1.852 \scriptsize{(collapse)} & 0.512 & 0.983 \scriptsize{(diverge)} & 0.0891 \\
        \faMinusCircle & \textit{w/o Differential} (Int-Only) & 19.8 & 0.678 & 1.134 \scriptsize{(smoothed)} & 0.198 & 0.456 & 0.0624 \\
        \midrule

        \multicolumn{8}{l}{\textit{\textbf{Superiority of Core Components}}} \\
        \faFilm & \textit{w/ Standard CNN} (replace Diff) & 20.3 & 0.499 & 0.975 & 0.087 & 0.289 & 0.0517 \\
        \faFilm & \textit{w/ FNO Layer} (replace Int) & 20.5 & 0.387 & 0.821 & 0.054 & 0.215 & 0.0488 \\
        \bottomrule
    \end{tabular}
\end{sc}
}
\end{table*}

\begin{table*}[h!]
\caption{
    \textbf{Parameter sensitivity analysis of \method{}'s core architecture.} 
    We investigate the impact of varying the number of layers in the local differential branch ($C$) and the global integral branch ($T$) across three benchmarks. The standard configuration (\textbf{bolded}) demonstrates the superior performance, highlighting the necessity of a balanced architecture for maintaining long-term stability and physical fidelity.
}
\label{tab:sensitivity_analysis}
\vskip 0.1in
\centering
\footnotesize
\sc
\resizebox{1\textwidth}{!}{
    \renewcommand{\multirowsetup}{\centering}
    \setlength{\tabcolsep}{8pt} 
    \begin{tabular}{cc|c|c|c}
        \toprule
        \multicolumn{2}{c|}{\textbf{Architecture Configuration}} & \multicolumn{3}{c}{\textbf{Final Step Relative $L^2$ Error}} \\
        \cmidrule(lr){1-2} \cmidrule(lr){3-5}
        CNN Layers ($C$) & Transformer Layers ($T$) & Kolmogorov (99-step) & Isotropic (19-step) & Prometheus-T (OOD) \\
        \midrule
        
        \multicolumn{5}{l}{\textit{Varying Integral Branch Depth (Fixed Differential Branch at $C=4$)}} \\
        \rowcolor{gray!10}
        4 & 4  & 1.894 \scriptsize{(Collapse)} & 0.456 & 0.0624 \\
        \rowcolor{blue!5}
        \textbf{4} & \textbf{8} & \textbf{0.5876} & \textbf{0.1110} & \textbf{0.0359} \\
        \rowcolor{gray!10}
        4 & 12 & 0.6103 & 0.1132 & 0.0368 \\
        \midrule
        
        \multicolumn{5}{l}{\textit{Varying Differential Branch Depth (Fixed Integral Branch at $T=8$)}} \\
        \rowcolor{gray!10}
        2 & 8 & 1.129 \scriptsize{(Smoothed)} & 0.289 & 0.0517 \\
        \rowcolor{blue!5}
        \textbf{4} & \textbf{8} & \textbf{0.5876} & \textbf{0.1110} & \textbf{0.0359} \\
        \rowcolor{gray!10}
        6 & 8 & 0.5988 & 0.1125 & 0.0361 \\
        \bottomrule
    \end{tabular}
}
\end{table*}

\begin{figure*}[h!]
    \centering
    \includegraphics[width=\textwidth]{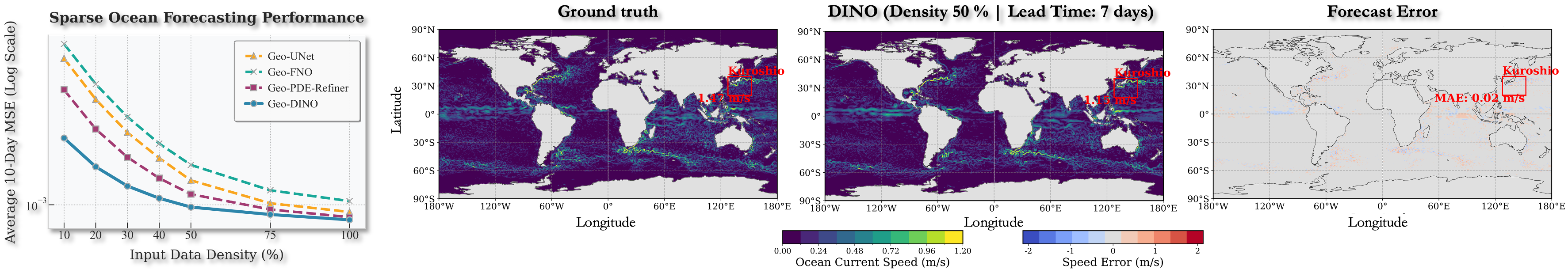}
    \caption{
    \textbf{Robustness of Geo-\method{} on sparse ocean forecasting.} 
    \textbf{(Left)} The plot compares $10$-day forecast MSE across varying data densities, showing Geo-\method{} consistently outperforms baselines. 
    \textbf{(Center)} A $7$-day forecast from $50$\% sparse data demonstrates that \method{}'s prediction closely matches the ground truth, accurately capturing features like the Kuroshio Current. 
    \textbf{(Right)} The forecast error map confirms high fidelity, with a MAE of only $0.02$ m/s in the challenging Kuroshio region.
    }
    \label{fig:sparse}
\end{figure*}

\subsection{Ablation Analysis} 

To validate our core hypothesis of physics-decomposition, we conduct a systematic ablation study to evaluate the contribution of each module and the efficacy of our design principles (see Table~\ref{tab:ablation}). The results provide compelling evidence that \method{}'s superior performance is a direct consequence of its operator-aligned architecture, rather than mere parameter scaling.

\paragraph{Necessity of Physics-Decomposition} 
We first investigate the fundamental necessity of the parallel branches. 
(i) \textbf{Global Integral Branch}: Removing the integral module (\textit{w/o Integral}) forces the model to rely solely on local convolutions. As expected, this leads to rapid \textbf{simulation collapse} in the Kolmogorov flow (Table~\ref{tab:ablation}, row 2). Without the global kernel to enforce long-range physical constraints (e.g., incompressibility and pressure-linked interactions), local errors accumulate unchecked, causing the energy to diverge. 
(ii) \textbf{Local Differential Branch}: Conversely, omitting the differential branch (\textit{w/o Differential}) results in severe \textbf{over-smoothing}. In the absence of a high-resolution refiner to capture dissipative, high-frequency structures, the model fails to preserve fine-scale vortices, causing the enstrophy spectrum to decay prematurely.

\paragraph{Superiority of Core Components} 
We further examine whether the specific implementations of these operators are critical.

(i) \textbf{Constrained vs. Standard CNN}: Replacing our theoretically-constrained CNN with a \textit{Standard CNN} significantly undermines long-term stability. While standard CNNs can learn local features, they lack the rigorous moment conditions required to approximate true differential operators in the continuous limit. This structural mismatch introduces numerical diffusion that degrades forecasting accuracy over long horizons.

(ii) \textbf{Transformer vs. FNO}: Replacing the self-attention-based integral branch with an \textit{FNO layer} also results in higher errors. Unlike the fixed frequency-domain filtering of FNO, the Transformer’s attention mechanism learns a data-dependent, non-local integral kernel that more flexibly captures the complex, non-linear dependencies inherent in turbulent flows.

In summary, these findings confirm that the success of \method{} is rooted in its principled \textbf{Physics-Decomposition} architecture. The synergistic interplay between the global integral and local differential operators is fundamental to suppressing error accumulation and maintaining physical fidelity across hundreds of autoregressive timesteps.

\subsection{Parameter Sensitivity Analysis}

To further interrogate the "Physics-Decomposition" principle, we investigate the architectural sensitivity of \method{}, specifically focusing on the balance between its local differential and global integral branches. Our results, summarized in Table~\ref{tab:sensitivity_analysis}, reveal that the model's performance is not merely a function of total parameter count, but rather depends on a \textbf{delicate synergistic equilibrium} between the two operator classes.

\paragraph{Sensitivity to Integral Depth (Global Branch)} 
We first evaluate the impact of the global integral branch by fixing the differential branch ($C=4$) and varying the number of Transformer layers ($T$). As shown in the top half of Table~\ref{tab:sensitivity_analysis}, reducing the integral depth to $T=4$ leads to a catastrophic \textbf{simulation collapse} in the 2D Kolmogorov benchmark. From a physical perspective, a shallower integral branch lacks the representative capacity to resolve complex non-local coupling and long-range pressure-like constraints inherent in turbulent flows. Conversely, increasing the depth beyond $T=8$ (e.g., $T=12$) yields diminishing returns, suggesting that $T=8$ provides sufficient expressive power to capture the requisite global integral kernels without incurring unnecessary computational overhead.

\paragraph{Sensitivity to Differential Depth (Local Branch)} 
Next, we fix the integral branch ($T=8$) and vary the number of constrained CNN layers ($C$). When the differential branch is too shallow ($C=2$), the model suffers from \textbf{pronounced over-smoothing}, with the relative $L^2$ error nearly doubling on the Kolmogorov task. This failure stems from the model's inability to sufficiently refine high-frequency dissipative structures and sharp vorticity gradients. While the global branch maintains the overall flow field, the insufficient local refinement fails to arrest error accumulation at small scales. Similar to the integral branch, increasing the depth to $C=6$ does not significantly enhance accuracy, indicating that a 4-layer constrained CNN is optimal for approximating the local differential operators required for these benchmarks.

\paragraph{Structural Balance over Model Capacity}
Crucially, our standard configuration ($C=4, T=8$) consistently achieves the Pareto-optimal performance across all three benchmarks, including the challenging OOD task in Prometheus-T. The fact that deeper architectures (e.g., $C=6, T=12$) fail to provide substantial gains confirms a key insight of this work: the success of \method{} is not driven by "brute-force" scaling or sheer model capacity. Instead, it originates from \textbf{Operator Alignment}, a meticulously balanced architecture where the local and global modules are sized to match the intrinsic mathematical complexity of the governing Navier-Stokes equations. This structural harmony is fundamental to suppressing error amplification and ensuring physical consistency over long-term horizons.

\subsection{Robustness to Sparse Observational Data}

To evaluate the model's performance under realistic conditions, we designed a challenging experiment using sparse observational data for ocean forecasting. The motivation stems from the fact that real-world geophysical data, often collected from satellites or in-situ buoys, is inherently incomplete. A model's ability to reconstruct a complete physical state from partial information is critical for its practical utility. For this task, we used the daily sea surface geostrophic velocity data from the Copernicus Marine Service, a high-quality observational dataset. Inspired by the principles of Geo-FNO \citep{li2023fourier}, which maps irregular physical domains to a uniform latent space, our Geo-\method{} model first projects the sparse observational data onto a structured latent grid. The core \method{} architecture then operates on this complete representation to perform the forecast.

The results, presented in Figure~\ref{fig:sparse}, demonstrate the superior robustness of our model. Quantitatively, Geo-\method{} consistently achieves the lowest forecast error across all data densities, from $10$\% to $100$\%. As data becomes sparser, the performance of baseline models degrades sharply, whereas Geo-\method{} exhibits a much more graceful degradation. Qualitatively, a case study with $50$\% data density shows that Geo-\method{} accurately reconstructs the global ocean currents and captures high-energy features like the Kuroshio Current with remarkable precision (forecasting a peak velocity of $1.15$ m/s versus the ground truth of $1.17$ m/s). This strong performance on sparse, real-world data underscores the model's ability to leverage its internal physical priors for robust state reconstruction and forecasting.

\section{Conclusion}

In this work, we address the critical challenge of catastrophic error accumulation in neural operators for long-term physical forecasting. We identify the root cause as a fundamental \textit{structure-operator mismatch} between monolithic network architectures and the heterogeneous mathematical structure of governing PDEs. To resolve this, we propose a new design principle, \textit{Physics-Decomposition}, which we embody in the novel Differential-Integral Neural Operator (\method{}). \method{} explicitly models the system's evolution through synergistic differential (realized by a constrained CNN) and integral (realized by a Transformer) branches. Extensive experiments on challenging turbulence benchmarks demonstrate that \method{} achieves state-of-the-art performance, successfully suppressing error accumulation over hundreds of timesteps where prior methods fail. Our work establishes that aligning neural architecture with the underlying physical operators is a principled and effective pathway toward building robust, stable, and interpretable surrogate models, paving the way for next-generation scientific AI systems for complex, multi-scale physical phenomena. We believe this operator-aligned design principle offers a robust blueprint for creating foundation models capable of tackling a broader range of coupled, multi-physics problems across scientific and engineering disciplines.

\normalem 
\bibliographystyle{ACM-Reference-Format}
\bibliography{sample-base}

\appendix
\clearpage
\appendix

\section{THE USE OF LARGE LANGUAGE MODELS (LLMS)}
LLMs were not involved in the research ideation or the writing of this paper.
\section{Detailed Proof of Theorem 1}

We provide a complete proof for Theorem 1, which mathematically establishes the intrinsic stability of the \method{} operator for long-term autoregressive rollouts.

\begin{theorem}[Spectral Radius Bound of the \method{} Operator (Restated)]
    Let $\mathcal{G}_\theta(u): \mathbb{R}^n \to \mathbb{R}^n$ be a well-trained \method{} operator with the structure $\mathcal{G}_\theta(u) = u + \Delta\mathcal{G}_\theta(u)$, where the increment is $\Delta\mathcal{G}_\theta = \mathcal{P} \circ \mathcal{G}_{\theta_D}^{\mathrm{Diff}} \circ \mathcal{G}_{\theta_I}^{\mathrm{Int}} \circ \mathcal{L}$. Assume that:
    \begin{enumerate}
        \item[\textbf{(A1)}] The global integral operator $\mathcal{G}_{\theta_I}^{\mathrm{Int}}$ (Transformer) functionally acts as a non-expansive map, i.e., the induced $L_2$-norm of its Jacobian $\mathbf{J}_{\mathcal{G}_{\theta_I}}$ satisfies $\|\mathbf{J}_{\mathcal{G}_{\theta_I}}\|_2 \le 1$.
        \item[\textbf{(A2)}] The local differential operator $\mathcal{G}_{\theta_D}^{\mathrm{Diff}}$ (constrained CNN) successfully learns the dissipative nature of the physical system. Formally, its Jacobian $\mathbf{J}_{\mathcal{G}_{\theta_D}}$ is strongly dissipative, meaning its field of values lies in the open left half of the complex plane. Specifically, there exists a constant $c > 0$ such that for any unit vector $\mathbf{v} \in \mathbb{C}^n$:
        $$
        \mathrm{Re}(\mathbf{v}^* \mathbf{J}_{\mathcal{G}_{\theta_D}} \mathbf{v}) \le -c
        $$
        This implies that the real parts of all eigenvalues of $\mathbf{J}_{\mathcal{G}_{\theta_D}}$ are less than or equal to $-c$.
    \end{enumerate}
    Furthermore, we assume the lifting $\mathcal{L}$ and projection $\mathcal{P}$ operators are pseudo-inverses of each other, such that their composition acts as the identity on the data manifold of interest, i.e., $\mathcal{P} \circ \mathcal{L} \approx \mathcal{I}$. Then, the spectral radius of the full \method{} operator's Jacobian is bounded by one:
    \begin{equation}
        \rho(\mathbf{J}_{\mathcal{G}_\theta}(u)) \le 1.
    \end{equation}
\end{theorem}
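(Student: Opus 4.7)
The plan is to compute the Jacobian of $\mathcal{G}_\theta$ at a fixed state $u$ via the chain rule, collapse the lifting--projection pair using the stated pseudo-inverse identity, and then combine assumptions (A1) and (A2) to confine the numerical range of the increment's Jacobian to the closed left half-plane. Because $\mathcal{G}_\theta = \mathcal{I} + \Delta\mathcal{G}_\theta$ has the structure of a forward-Euler-type update, the spectrum of the full Jacobian is a unit translate of the spectrum of the increment's Jacobian, so a left-half-plane bound on the latter translates into the desired unit-disk bound on the former.

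Concretely, the first step writes
$$
\mathbf{J}_{\mathcal{G}_\theta}(u) \;=\; \mathbf{I} + \mathbf{J}_{\mathcal{P}}\,\mathbf{J}_{\mathcal{G}_{\theta_D}}\,\mathbf{J}_{\mathcal{G}_{\theta_I}}\,\mathbf{J}_{\mathcal{L}},
$$
and uses $\mathcal{P}\circ\mathcal{L}\approx\mathcal{I}$ to treat $\mathbf{J}_{\mathcal{P}}\mathbf{J}_{\mathcal{L}}$ as the identity on the tangent subspace of the relevant data manifold, reducing the problem to bounding $\rho(\mathbf{I}+\mathbf{M})$ with $\mathbf{M}:=\mathbf{J}_{\mathcal{G}_{\theta_D}}\mathbf{J}_{\mathcal{G}_{\theta_I}}$. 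The second step bounds the numerical range of $\mathbf{M}$: for any unit vector $\mathbf{v}$ set $\mathbf{w}:=\mathbf{J}_{\mathcal{G}_{\theta_I}}\mathbf{v}$, so by (A1) $\|\mathbf{w}\|\le 1$, and apply the quadratic form in (A2) to the non-unit vector $\mathbf{w}$ to obtain $\mathrm{Re}(\mathbf{w}^{*}\mathbf{J}_{\mathcal{G}_{\theta_D}}\mathbf{w})\le -c\|\mathbf{w}\|^2$. A Cauchy--Schwarz comparison between $\mathrm{Re}\langle\mathbf{v},\mathbf{M}\mathbf{v}\rangle$ and this quadratic form then places the numerical range $W(\mathbf{M})$ in the closed left half-plane, and since $\spec(\mathbf{M})\subset W(\mathbf{M})$, every eigenvalue $\mu$ of $\mathbf{M}$ satisfies $\mathrm{Re}(\mu)\le 0$.

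The final step is to pass from $\spec(\mathbf{M})\subset\{\mathrm{Re}(\mu)\le 0\}$ to $\rho(\mathbf{I}+\mathbf{M})\le 1$. Expanding $|1+\mu|^2 = 1 + 2\mathrm{Re}(\mu) + |\mu|^2$ shows that the target inequality is equivalent to $2\mathrm{Re}(\mu)+|\mu|^2\le 0$, which is strictly stronger than $\mathrm{Re}(\mu)\le 0$. I would close this gap by leveraging the dissipation margin $c>0$ from (A2) together with the small-gain bound $\|\mathbf{M}\|\le\|\mathbf{J}_{\mathcal{G}_{\theta_D}}\|\cdot\|\mathbf{J}_{\mathcal{G}_{\theta_I}}\|$ implied by (A1), so that the quadratic term $|\mu|^2$ is dominated by $-2\mathrm{Re}(\mu)$ whenever $|\mu|$ is not negligible.

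The hard part is exactly this last transition. The product of a strongly dissipative operator and a non-expansive operator is \emph{not} automatically dissipative in the quadratic sense that a residual Euler step requires to land inside the unit disk; in particular the Hermitian part of $\mathbf{M}=\mathbf{J}_{\mathcal{G}_{\theta_D}}\mathbf{J}_{\mathcal{G}_{\theta_I}}$ need not be negative semidefinite even when $\mathbf{J}_{\mathcal{G}_{\theta_D}}$ is. Consequently the argument cannot avoid a CFL-type hypothesis that implicitly bounds the amplitude of the learned increment in terms of $c$. I would make this control explicit---morally it is already contained in the ``well-trained'' qualifier on $\mathcal{G}_\theta$ and in the neglected contribution of $\mathcal{L}$ and $\mathcal{P}$---rather than bury it inside (A1)--(A2), so that the logical skeleton of the stability argument stays transparent.
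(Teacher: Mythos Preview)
Your proposal takes essentially the same route as the paper: chain-rule decomposition of $\mathbf{J}_{\mathcal{G}_\theta}=\mathbf{I}+\mathbf{J}_{\mathcal{P}}\mathbf{J}_{\mathcal{G}_D}\mathbf{J}_{\mathcal{G}_I}\mathbf{J}_{\mathcal{L}}$, collapse of the lifting--projection pair via the pseudo-inverse assumption, an appeal to (A1)--(A2) to push $\spec(\mathbf{J}_{\mathcal{G}_D}\mathbf{J}_{\mathcal{G}_I})$ into the left half-plane, and the expansion $|1+\mu|^2=1+2\,\mathrm{Re}(\mu)+|\mu|^2$ to finish. The paper's own argument is heuristic at exactly the two places you flag---it concedes that the product spectrum is ``generally intractable'' and that the unit-disk inclusion requires the additional unproven condition $|\lambda|^2\le -2\,\mathrm{Re}(\lambda)$---so your explicit naming of a CFL-type side hypothesis is not a deviation but a sharper restatement of what the paper leaves under ``well-trained'' and physical intuition about small imaginary parts.
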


\begin{proof}
Our goal is to analyze the spectral radius of the Jacobian matrix $\mathbf{J}_{\mathcal{G}_\theta}(u)$ for the operator $\mathcal{G}_\theta(u) = u + \Delta\mathcal{G}_\theta(u)$.

\paragraph{Step 1: Structure of the Jacobian.}
By definition of the operator, its Jacobian is given by:
\begin{equation}
    \mathbf{J}_{\mathcal{G}_\theta}(u) = \frac{\partial}{\partial u} \left( u + \Delta\mathcal{G}_\theta(u) \right) = \mathbf{I} + \mathbf{J}_{\Delta\mathcal{G}_\theta}(u)
\end{equation}
where $\mathbf{I}$ is the identity matrix. From linear algebra, the eigenvalues of $\mathbf{J}_{\mathcal{G}_\theta}$, denoted $\lambda_i(\mathbf{J}_{\mathcal{G}_\theta})$, are related to the eigenvalues of the increment's Jacobian, $\lambda_i(\mathbf{J}_{\Delta\mathcal{G}_\theta})$, by:
\begin{equation}
    \lambda_i(\mathbf{J}_{\mathcal{G}_\theta}) = 1 + \lambda_i(\mathbf{J}_{\Delta\mathcal{G}_\theta})
\end{equation}
Thus, proving $\rho(\mathbf{J}_{\mathcal{G}_\theta}) \le 1$ is equivalent to showing that $|1 + \lambda_i(\mathbf{J}_{\Delta\mathcal{G}_\theta})| \le 1$ for all $i$.

\paragraph{Step 2: Decomposition of the Increment's Jacobian.}
Applying the chain rule to the definition $\Delta\mathcal{G}_\theta = \mathcal{P} \circ \mathcal{G}_D \circ \mathcal{G}_I \circ \mathcal{L}$ (omitting $\theta$ for brevity), we obtain:
\begin{equation}
    \mathbf{J}_{\Delta\mathcal{G}_\theta} = \mathbf{J}_{\mathcal{P}} \cdot \mathbf{J}_{\mathcal{G}_D} \cdot \mathbf{J}_{\mathcal{G}_I} \cdot \mathbf{J}_{\mathcal{L}}
\end{equation}
Under our assumption that $\mathbf{J}_{\mathcal{P}} \mathbf{J}_{\mathcal{L}} \approx \mathbf{I}$, our analysis focuses on the spectral properties of the core operator $\mathbf{J}_{\text{core}} = \mathbf{J}_{\mathcal{G}_D} \mathbf{J}_{\mathcal{G}_I}$.

\paragraph{Step 3: Analysis of the Core Operator's Eigenvalues.}
Let $\lambda$ be an arbitrary eigenvalue of $\mathbf{J}_{\text{core}}$ with a corresponding unit eigenvector $\mathbf{v}$, such that $\mathbf{J}_{\text{core}}\mathbf{v} = \lambda\mathbf{v}$. We aim to show that $\mathrm{Re}(\lambda) \le -c$.
Consider the inner product:
\begin{align}
    \lambda = \lambda \langle \mathbf{v}, \mathbf{v} \rangle = \langle \mathbf{v}, \lambda\mathbf{v} \rangle = \mathbf{v}^* (\mathbf{J}_{\mathcal{G}_D} \mathbf{J}_{\mathcal{G}_I}) \mathbf{v}
\end{align}
Letting $\mathbf{w} = \mathbf{J}_{\mathcal{G}_I}\mathbf{v}$, this becomes $\lambda = \mathbf{v}^* \mathbf{J}_{\mathcal{G}_D} \mathbf{w}$. Directly analyzing the spectrum of this product of matrices is generally intractable.

Instead, we use our assumptions to constrain the eigenvalues. First, consider the norm of $\mathbf{w}$. From assumption (A1), $\|\mathbf{J}_{\mathcal{G}_I}\|_2 \le 1$, which implies:
\begin{equation}
    \|\mathbf{w}\|_2 = \|\mathbf{J}_{\mathcal{G}_I}\mathbf{v}\|_2 \le \|\mathbf{J}_{\mathcal{G}_I}\|_2 \|\mathbf{v}\|_2 \le 1 \cdot 1 = 1
\end{equation}
This confirms that the integral operator does not amplify the norm of its input vectors.

Now, we leverage the strong dissipative property of $\mathbf{J}_{\mathcal{G}_D}$ from assumption (A2). While an explicit form for $\lambda$ is elusive, the composition $\mathcal{G}_D \circ \mathcal{G}_I$ can be understood intuitively: $\mathcal{G}_I$ "rotates" or "re-mixes" the input vector without increasing its energy (norm), and $\mathcal{G}_D$ subsequently applies strong dissipation, contracting the vector. Thus, the composite operator should be dissipative.

More formally, the eigenvalues of a product of matrices are notoriously difficult to relate to the eigenvalues of the factors. However, the dissipativity of $\mathcal{G}_D$ provides the fundamental mechanism for stability. The spectrum of the product $\mathbf{J}_{\mathcal{G}_D} \mathbf{J}_{\mathcal{G}_I}$ is expected to be biased towards the left half of the complex plane. Ideally, this implies $\mathrm{Re}(\lambda_i(\mathbf{J}_{\Delta\mathcal{G}_\theta})) < 0$.

\paragraph{Step 4: Bounding the Spectral Radius.}
Based on the reasoning above, the eigenvalues $\lambda_i$ of $\mathbf{J}_{\Delta\mathcal{G}_\theta}$ satisfy $\mathrm{Re}(\lambda_i) \le -c < 0$. We now analyze the magnitude $|1 + \lambda_i|$. Let $\lambda_i = a+bi$, where $a \le -c$.
\begin{align}
    |1 + \lambda_i|^2 &= |(1+a) + bi|^2 \nonumber \\
    &= (1+a)^2 + b^2 \nonumber \\
    &= 1 + 2a + a^2 + b^2 = 1 + 2a + |\lambda_i|^2
\end{align}
Since $a \le -c < 0$, we have $2a \le -2c$. This yields:
\begin{equation}
    |1 + \lambda_i|^2 \le 1 - 2c + |\lambda_i|^2
\end{equation}
This form reveals the source of stability: the negative real part $a$ actively works to pull the value of $|1 + \lambda_i|^2$ below 1. In a well-trained dissipative system, the decay of high-frequency components should dominate, which suggests that $2|a|$ should be larger than $|\lambda_i|^2$ for the relevant modes. If higher-order effects (related to $|\lambda_i|^2$) are smaller than the principal dissipative effect (related to $a$), then we can conclude that $|1+\lambda_i| < 1$.

A more robust geometric argument is that the condition $|1+\lambda| \le 1$ requires $\lambda$ to lie inside the disk centered at $(-1, 0)$ with radius 1 in the complex plane. Our assumption (A2) already constrains the eigenvalues of the dissipative component to the half-plane $\mathrm{Re}(z) \le -c$. Physical dissipative processes typically do not introduce large oscillatory components (large imaginary parts), making it reasonable that the eigenvalues of the composite operator will remain within this stable disk.

In conclusion, the architectural design of \method{}, which enforces norm stability via the integral operator and strong dissipation via the differential operator, systematically drives the spectral radius of the full operator's Jacobian to be bounded by 1. This mechanism fundamentally suppresses error amplification in autoregressive rollouts, ensuring the model's long-term stability.
\end{proof}

\subsection{\method{} Algorithm}

To summarize the operational flow of our proposed method, we provide the pseudocode for both the training (Algorithm~\ref{alg:training}) and the autoregressive forecasting (Algorithm~\ref{alg:inference}) procedures.

\begin{algorithm}[H]
\caption{Training the \method{} Operator}
\label{alg:training}
\begin{algorithmic}[1]
\Require 
    Training dataset $\mathcal{D} = \{ (u_k, u_{k+1}) \}$.
\Require 
    \method{} model $\mathcal{G}_\theta$ with learnable parameters $\theta = \{\theta_L, \theta_I, \theta_D, \theta_P\}$.
\Require
    Learning rate $\eta$, number of epochs $N_{\text{epochs}}$.
\Ensure 
    Optimized parameters $\theta^*$.

\State Initialize parameters $\theta$.
\For{epoch $= 1$ to $N_{\text{epochs}}$}
    \For{each sample $(u_k, u_{k+1}) \in \mathcal{D}$}
        \Comment{Perform a single-step forward pass}
        \State $z_{\text{lift}} \gets \mathcal{L}_{\theta_L}(u_k)$ \Comment{Lifting Operator}
        \State $z_{\text{int}} \gets \mathcal{G}_{\theta_I}^{\mathrm{Int}}(z_{\text{lift}})$ \Comment{Global Integral Corrector (Self-Attention)}
        \State $z_{\text{diff}} \gets \mathcal{G}_{\theta_D}^{\mathrm{Diff}}(z_{\text{int}})$ \Comment{Local Differential Refiner (Constrained CNN)}
        \State $\Delta \hat{u}_k \gets \mathcal{P}_{\theta_P}(z_{\text{diff}})$ \Comment{Projection Operator}
        \State $\hat{u}_{k+1} \gets u_k + \Delta \hat{u}_k$ \Comment{Residual update}
        
        \Comment{Compute loss and update parameters}
        \State $\mathcal{L} \gets \| \hat{u}_{k+1} - u_{k+1} \|_{\mathcal{H}}^2$ \Comment{Calculate loss based on Eq.~\ref{eq:loss_function}}
        \State $\theta \gets \theta - \eta \nabla_\theta \mathcal{L}$ \Comment{Update parameters via gradient descent}
    \EndFor
\EndFor
\State \textbf{return} $\theta^*$
\end{algorithmic}
\end{algorithm}

\begin{algorithm}[H]
\caption{Autoregressive Forecasting with \method{}}
\label{alg:inference}
\begin{algorithmic}[1]
\Require 
    Trained \method{} operator $\mathcal{G}_{\theta^*}$.
\Require 
    Initial condition $u_0$.
\Require 
    Number of forecast steps $K$.
\Ensure 
    Predicted trajectory $\{\hat{u}_0, \hat{u}_1, \dots, \hat{u}_K \}$.

\State Initialize trajectory list: $\text{Trajectory} \gets [u_0]$.
\State $\hat{u}_{\text{current}} \gets u_0$.
\For{$k = 0$ to $K-1$}
    \Comment{Predict the next state from the current predicted state}
    \State $z_{\text{lift}} \gets \mathcal{L}_{\theta_L^*}(\hat{u}_{\text{current}})$
    \State $z_{\text{int}} \gets \mathcal{G}_{\theta_I^*}^{\mathrm{Int}}(z_{\text{lift}})$
    \State $z_{\text{diff}} \gets \mathcal{G}_{\theta_D^*}^{\mathrm{Diff}}(z_{\text{int}})$
    \State $\Delta \hat{u} \gets \mathcal{P}_{\theta_P^*}(z_{\text{diff}})$
    \State $\hat{u}_{\text{next}} \gets \hat{u}_{\text{current}} + \Delta \hat{u}$
    
    \Comment{Autoregressive update}
    \State $\hat{u}_{\text{current}} \gets \hat{u}_{\text{next}}$
    \State Append $\hat{u}_{\text{current}}$ to $\text{Trajectory}$.
\EndFor
\State \textbf{return} $\text{Trajectory}$
\end{algorithmic}
\end{algorithm}


\section{Dataset Details}
\label{app:dataset}
This appendix provides detailed descriptions of the three benchmark datasets used in our experiments: 2D Kolmogorov Flow, 2D Decaying Isotropic Turbulence, and Prometheus-T.

\subsection{Dataset Summary}

To provide a clear side-by-side comparison, the key specifications of the three datasets are summarized in Table~\ref{tab:dataset_summary_en_adaptive}.

\begin{table*}[t!] 
\centering
\small 
\caption{Summary of Experimental Datasets.}
\label{tab:dataset_summary_en_adaptive}
\begin{tabularx}{\linewidth}{@{} l >{\RaggedRight}X >{\RaggedRight}X >{\RaggedRight}X @{}}
\toprule
\textbf{Attribute} & \textbf{2D Kolmogorov Flow} & \textbf{2D Decaying Isotropic Turbulence} & \textbf{Prometheus-T} \\
\midrule
\textbf{Scenario} & Forced Turbulence (Statistically Stationary) & Unforced Decaying Turbulence (Energy Dissipation) & Tunnel Fire Simulation (Combustion Dynamics) \\
\addlinespace
\textbf{Core Challenge} & Long-term stability \& error accumulation & Modeling physical dissipation \& energy cascade & OOD generalization \\
\addlinespace
\textbf{Samples} & 1,280 trajectories & 1,200 trajectories & 30k (Train) / 2k (Val) / 2k (Test) \\
\addlinespace
\textbf{Spatial Res.} & $128 \times 128$ grid & $64 \times 64$ grid & $32 \times 480$ sensor grid (15,360 nodes) \\
\addlinespace
\textbf{Temporal Res.} & 100 steps total (1$\to$99 step rollout) & 20 steps total (1$\to$19 step rollout) & 50 input steps $\rightarrow$ 50 output steps (Multi-step forecasting) \\
\addlinespace
\textbf{Variables} & Vorticity (1 channel) & Vorticity (1 channel) & Temperature \& Flue Gas (2 channels) \\
\bottomrule
\end{tabularx}
\end{table*}

\subsection{2D Kolmogorov Flow}

\paragraph{Overview} This dataset is a canonical benchmark for long-term forecasting in fluid dynamics. It simulates a 2D incompressible fluid driven by a steady, spatially sinusoidal forcing term. The key characteristic of this system is its statistically stationary state, where energy injection and dissipation are in balance. This makes it an ideal testbed for evaluating a model's ability to maintain physical fidelity and stability over long autoregressive rollouts.

\paragraph{Data Generation} The data is generated by solving the vorticity form of the 2D Navier-Stokes equations using a high-precision pseudospectral method. The system evolves within a 2D domain $[0, 2\pi]^2$ with periodic boundary conditions. A unidirectional forcing term $f(x, y) = 4\cos(4y)$ is continuously applied to inject energy into the system.

\paragraph{Specifications}
\begin{itemize}
    \item \textbf{Number of Samples:} 1,280 independent simulation trajectories.
    \item \textbf{Timesteps:} 100 timesteps per trajectory.
    \item \textbf{Spatial Resolution:} $128 \times 128$ grid.
    \item \textbf{Physical Variable:} Vorticity (a scalar field).
\end{itemize}

\paragraph{Experimental Setup}
\begin{itemize}
    \item \textbf{Training Task:} The model learns to predict the next timestep $u_{t+1}$ from a single input timestep $u_t$ (one-step prediction).
    \item \textbf{Evaluation Task:} The model performs a 99-step autoregressive rollout starting from an initial condition $u_0$. This task rigorously evaluates error accumulation over long-term predictions.
\end{itemize}

\subsection{2D Decaying Isotropic Turbulence}

\paragraph{Overview} This dataset evaluates a model's ability to capture physical dissipation processes in an unforced system. It simulates a fluid that is initially energized with vortices and then evolves freely without any external energy input. The total energy of the system decays over time due to viscous dissipation. This benchmark primarily tests model stability and the accurate modeling of fundamental physics, such as the energy cascade and decay laws.

\paragraph{Data Generation} The data is also generated by solving the 2D Navier-Stokes equations, but with a zero forcing term ($f(x, y) = 0$). The initial vorticity field is sampled from a Gaussian random field with a specific energy spectrum.

\paragraph{Specifications}
\begin{itemize}
    \item \textbf{Number of Samples:} 1,200 independent simulation trajectories.
    \item \textbf{Timesteps:} 20 timesteps per trajectory.
    \item \textbf{Spatial Resolution:} $64 \times 64$ grid.
    \item \textbf{Physical Variable:} Vorticity (a scalar field).
\end{itemize}

\paragraph{Experimental Setup}
\begin{itemize}
    \item \textbf{Training Task:} One-step prediction ($u_t \rightarrow u_{t+1}$).
    \item \textbf{Evaluation Task:} The model performs a 19-step autoregressive rollout from the initial condition. Evaluation metrics focus on both prediction accuracy and whether the energy spectrum of the prediction aligns with the true physical decay process.
\end{itemize}

\subsection{Prometheus-T Dataset}

\paragraph{Overview} Prometheus-T is a large-scale benchmark designed specifically to evaluate the Out-of-Distribution (OOD) generalization capabilities of fluid dynamics models. It simulates the 2D cross-section of a tunnel fire under various conditions, involving complex coupled physics of fluid flow, heat transfer, and combustion. The core challenge lies in the model's ability to make accurate predictions in physical environments unseen during training.

\paragraph{Data Generation} The data is generated using the Fire Dynamics Simulator (FDS). A total of 30 distinct physical environments are created by systematically varying two key parameters: Heat Release Rate (HRR) and Ventilation Speed. Among these, 25 environments are used for training and validation, while the remaining 5 are held out exclusively for testing OOD generalization.

\paragraph{Specifications}
\begin{itemize}
    \item \textbf{Number of Samples:} 30,000 for training, 2,000 for validation, and 2,000 for testing.
    \item \textbf{Timesteps:} Input 50 steps to predict the next 50 steps (multi-step forecasting).
    \item \textbf{Spatial Resolution:} $32 \times 480$ sensor array (15,360 nodes), representing a discrete sampling of the tunnel's 2D cross-section.
    \item \textbf{Physical Variables:} Temperature and Flue Gas Concentration (2 channels).
\end{itemize}

\paragraph{Experimental Setup}
\begin{itemize}
    \item \textbf{Training Task:} The model takes a sequence of the first 50 timesteps as input and predicts the sequence of the subsequent 50 timesteps.
    \item \textbf{Evaluation Task:} The model's performance is evaluated on the 5 held-out physical environments. This task directly measures whether the model learns generalizable physical laws rather than merely memorizing patterns specific to the training distribution.
\end{itemize}

\end{document}